\DeclareMathAlphabet{\mathpzc}{OT1}{pzc}{m}{it}
\newcommand{\abs}[1]{\left| #1 \right|}
\newcommand{\sett}[1]{\left\{ #1 \right\}}
\newcommand{\inner}[2]{\left\langle #1,\, #2 \right\rangle}
\newcommand{\norm}[1]{\left\| #1 \right\|}
\DeclareMathOperator*{\argmin}{arg\,min}
\DeclareMathOperator{\poly}{poly}
\DeclareMathOperator{\polylog}{polylog}
\newcommand{\parens}[1]{\left( #1 \right)}
\newcommand{\vol}[1]{\textrm{Vol}\left( #1 \right)}
\newcommand{\blockremove}[1]{  }
\newcommand{\dec}[0]{\phi}
\newcommand{\enc}[0]{\psi}
\newcommand{\out}[0]{\zeta}
\title[Open Problem: The Oracle Complexity of Convex Optimization with Limited Memory]{Open Problem: The Oracle Complexity of Convex Optimization \\ with Limited Memory}
\begin{document}

\maketitle
\begin{abstract}We note that known methods achieving the
    optimal oracle complexity for first order convex optimization require
    quadratic memory, and ask whether this is necessary, and more
    broadly seek to characterize the minimax number of first order
    queries required to optimize a convex Lipschitz function subject
    to a memory constraint.
\end{abstract}

\section{Introduction}\label{sec:intro}
We consider first-order optimization methods for convex Lipschitz bounded functions.  I.e., for the following optimization problem
\begin{equation}\label{eq:optimization-problem}
\min_{x\in\mathbb{R}^d:\norm{x}\leq B} F(x)
\end{equation}
where $F$ is convex and $L$-Lipschitz, we consider using (exact)
queries returning $F(x)$ and a subgradient $\nabla F(x)$, and ask the
classical question of how many queries are required to ensure we find
an $\epsilon$-suboptimal solution.  The classical answer is that $O(d
\log LB/\epsilon)$ suffice, using the center-of-mass method, and that,
when $d\ll (LB/\epsilon)^2$ this is optimal \citep{nemirovskyyudin1983}.
However, the center-of-mass method is intractable, at least exactly.
Other methods with a $O(\poly(d) \polylog(LB/\epsilon))$, and even
$\tilde{O}(d \textrm{log}(LB/\epsilon))$, query complexity and
polynomial runtime have been suggested, including as the Ellipsoid
method \citep{shor1970convergence}, Vaidya's method
\citep{atkinson1995cutting} and approximate center of mass using
sampling \citep{bertsimas2002solving}.  But these are generally not
used in practice, since the higher order polynomial runtime dependence
on the dimension is prohibitive.  These methods also all require
storing all returned gradients, or alternatively an ellipsoid in
$R^d$, and so $\Omega(d^2)$ memory.  A simpler alternative is gradient
descent, which requires $O\left( (LB/\epsilon)^2 \right)$ queries, but
only $O(d \log LB/\epsilon)$ memory and $O(d)$ runtime per query.

One might ask: is it possible to achieve the optimal query complexity
using a ``simple'' method?  Since it is much harder to provide runtime
lower bound, we instead focus on the required memory, and ask:
\textbf{is it possible to to achieve the optimal query complexity with
  $O(d\log LB/\epsilon)$ memory?  How does the first-order oracle
  complexity trade off with the memory needed by an optimization
  algorithm?}  This question is formalized in the following Section.

\blockremove{
There are two well-known classes of first-order algorithms for optimizing this type of objective: (1) descent methods like gradient descent and (2) center-of-mass algorithms. These algorithms are significantly different, both in their approach and in their guarantees. 

Algorithms like gradient descent are lightweight and simple--the algorithm maintains an iterate $x_t\in\mathbb{R}^d$, computes the gradient $\nabla F(x_t)$, and updates its iterate $x_{t+1} = x_t - \eta_t \nabla F(x_t)$. Gradient descent, with an optimally chosen stepsize, guarantees that after $T = O\parens{\frac{L^2B^2}{\epsilon^2}}$ updates, $F(x_T) - F^* \leq \epsilon$. 

In comparison, the center of mass method is much heavier. It maintains a set $K_t$ specified by the gradients computed so far, which is ensured to contain the optimum. It then shrinks the set by computing the gradient $\nabla F(c_t)$ at the center of mass of $K_t$ and updating the set to exclude $\sett{x : \inner{x-c_t}{\nabla F(c_t)} \geq 0}$. The algorithm guarantees that after $T = O\parens{d\log\frac{LB}{\epsilon}}$ updates, $F(c_t) - F^* \leq \epsilon$ for some $t \in [T]$. 

Comparing the two methods, several things are fairly clear. On the one hand, when the desired accuracy is sufficiently small $\epsilon \lesssim 1/\sqrt{d}$, the iteration complexity--that is, the number of gradients which must be computed--is much smaller for the center of mass method than gradient descent, $d\log\frac{LB}{\epsilon} \ll \frac{L^2B^2}{\epsilon^2}$. On the other hand, an iteration of gradient descent is much more computationally efficient than an iteration of the center of mass method; gradient descent requires only a small number of vector operations whereas the center of mass method requires computing the center of mass of a set which is the intersection of many hyperplanes--a costly task. This computational aspect is the reason that gradient descent is used extensively in practice while the center of mass method is used rarely, if ever. 

This mismatch between the oracle complexity and the runtime is a somewhat tricky issue from the standpoint of understanding the computational cost of first-order optimization. When the desired accuracy $\epsilon$ is sufficiently small, the first-order oracle complexity really is $O(d\log\frac{LB}{\epsilon})$, the center of mass method is an ``optimal'' algorithm, and gradient descent has far from optimal oracle complexity. Nevertheless, doing a larger number of much cheaper gradient descent updates usually turns out to be the more efficient approach overall. Therefore, insofar as the purpose of the first-order oracle complexity is to be a stand-in for the computational cost, it has failed us here. 

In order to understand the computational cost of solving \eqref{eq:optimization-problem} using first-order methods, we therefore need a different proxy. Of course, one might try to understand the computational cost directly, e.g.~bounding the number of steps needed by a Turing machine to compute the answer. However, it is fair to say that there are few, if any, meaningful computational lower bounds of this form for \emph{any} problem, even ones much simpler than convex optimization. 

Here, we propose an approach based on the \emph{memory} needed to implement an optimization algorithm. Observe that the gradient descent algorithm only needs to maintain and manipulate a single $d$-dimensional vector per iteration, i.e.~its iterate $x_t$ or, equivalently, a weighted sum of all the gradients computed so far $\sum_{k=0}^{t-1} \eta_k \nabla F(x_k)$. On the other hand, the center of mass algorithm needs to maintain the current set $K_t$, which can be represented using the set of all gradients computed so far $\sett{\nabla F(x_0),\dots,\nabla F(x_{t-1})} \in \mathbb{R}^{d\times t}$. 

Therefore, the gradient descent algorithm can be implemented using $O(d)$ memory, while the center of mass algorithm requires $O(dT) = O(d^2)$ memory. Of course, an arbitrary amount of information can be stored in a single real number, so one must count the number of bits needed rather than the number of real numbers. Nevertheless, the number of bits of memory needed to implement gradient descent remains $O(d\log(LB/\epsilon))$ and the number of bits needed for the center of mass method is $O(d^2\log^2(LB/\epsilon))$ (see Theorems \ref{thm:gd} and \ref{thm:com}).
}

\section{Problem Formulation}
We capture the class of first-order optimization algorithms that use
$M$ bits of memory in terms of a set of ``encoders'' and ``decoders.''
In each iteration, the decoder reads the $M$ bits of memory, and
determines a query point $x_t$.  The encoder receives the function
value $F(x_t)$ and a subgradient $\nabla F(x_t)$ at $x_t$---as is
standard with oracle based optimization, if $F(x_t)$ is not
differentiable at $x_t$, we require the method works for any valid
subgradient used.  The encoder then uses the current memory state,
$F(x_t)$ and $\nabla F(x_t)$ to update the memory state for the next
iteration. At the end, the algorithm's output is chosen as a function
of the final memory state. To be clear, the encoding and decoding
functions can require an arbitrary amount of memory to compute, and
can compute using real numbers. However, between each access to the
oracle, there is a ``bottleneck'' where the algorithm's state must be
compressed down to $M$ bits.

Formally, we define $\mathcal{A}_{M,T}$, the class of all
deterministic\footnote{We focus on deterministic algorithms, but an
  analogous class of randomized algorithms could easily be specified.
  However, it seems unnecessary to complicate things in this way
  because there is evidence that there is little to be gained through
  randomization for solving problems of the form
  \eqref{eq:optimization-problem} \cite{woodworth2017lower}.}
first-order optimization algorithms that use $M$ bits of memory and
$T$ function value and gradient computations. An algorithm $A \in
\mathcal{A}_{M,T}$ is specified by a set of decoder functions
$\sett{\dec_t: \sett{0,1}^M \rightarrow \mathbb{R}^d}_{t=1}^{T}$, a
set of encoder functions $\sett{\enc_t: \sett{0,1}^M \times \mathbb{R}
  \times \mathbb{R}^d \rightarrow \sett{0,1}^M}_{t=1}^{T}$, and an
output function $\out:\sett{0,1}^M\rightarrow\mathbb{R}^d$. The
algorithm's memory is initially blank $\mu_1 = 0$. The $T$ iteration
$t=1...T$ are specified recursively by $x_t = \dec_t(\mu_t)$ and
\begin{equation}
\mu_{t+1} = \enc_t(\mu_t, F(x_t), \nabla F(x_t)) = \enc_t(\mu_t, F(\dec_t(\mu_t)), \nabla F(\dec_t(\mu_t))),
\end{equation}
and the output of the algorithm, denoted $A(F)$, is given by:
\begin{equation}
A(F) = \out(\mu_T).
\end{equation}
Let $\mathcal{F}^d_{L,B}$ be the set of all convex, $L$-Lipschitz
functions $f:\mathbb{R}^d\rightarrow\mathbb{R}$ such that $\exists x^*
\in \argmin_x F(x)$ with $\norm{x^*} \leq B$. We define the minimax
memory-bounded first-order oracle complexity as
\begin{equation}
    T_{L,B}(d,M,\epsilon) = \inf\sett{T \in \mathbb{N}\ :\ \inf_{A \in \mathcal{A}_{M,T}}\sup_{F\in\mathcal{F}^d_{L,B}} F(A(F)) - \min_{\norm{x}\leq B}F(x) \leq \epsilon},
\end{equation}
where the supremum over functions $F$ should be interpreted also as a
supremum over all valid subgradients $\nabla F(x_t)$ used in the
updates.  Without loss of generality, we will fix $L = B = 1$ and
write $T(d,M,\epsilon) := T_{1,1}(d,M,\epsilon)$.  We will further say
that a query-memory tradeoff $(T,M)$ is {\em possible} for a problem
specified by $(d,\epsilon)$ if $T \geq T(d,M,\epsilon)$ and {\em
  impossible} if $T < T(d,M,\epsilon)$.

\begin{figure}\label{fig:memory-oracle-tradeoff-pic}
\centering
\includegraphics[width=10cm]{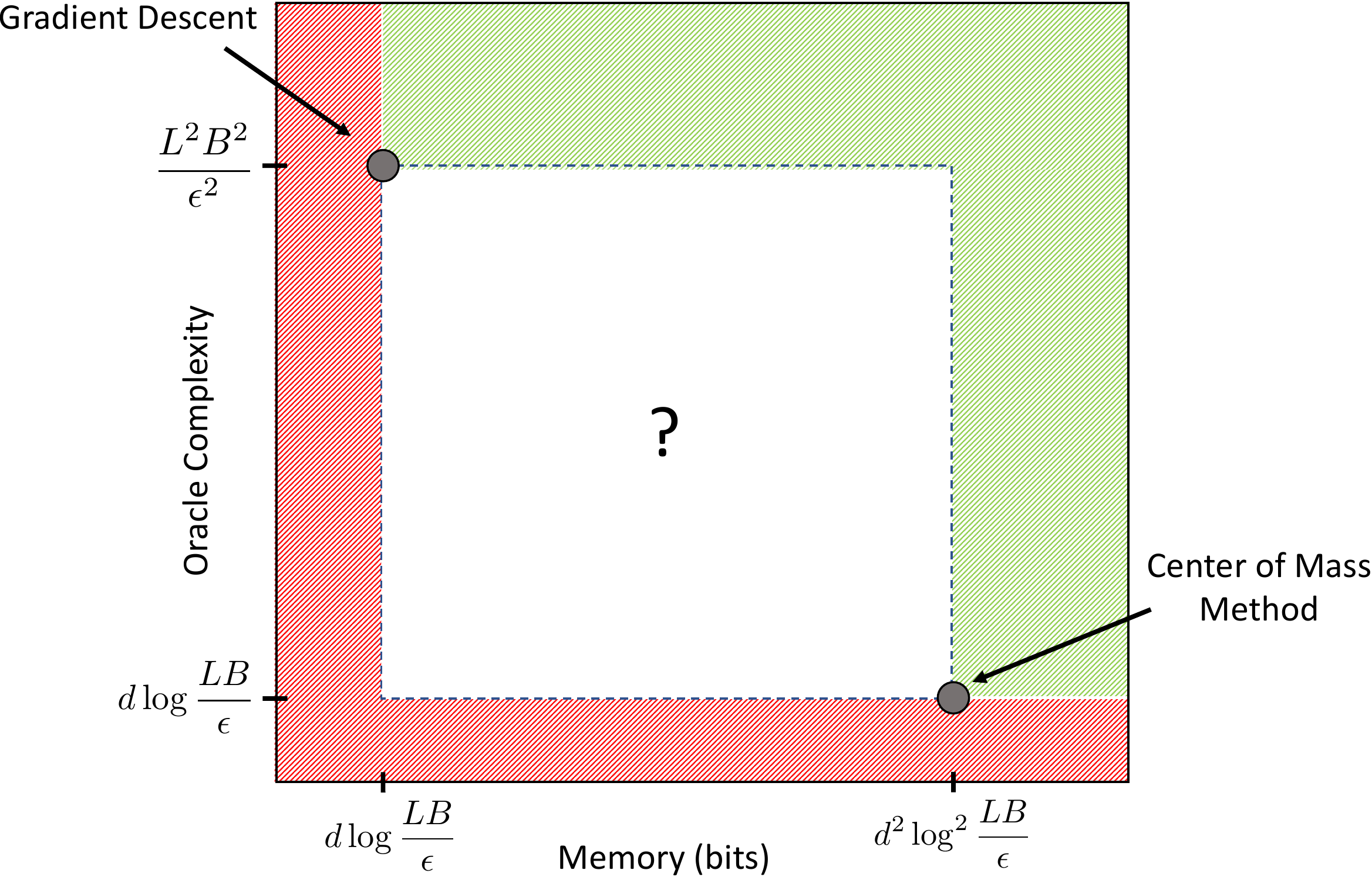}
\caption{The tradeoff between the memory needed and the first-order
  oracle (query) complexity of optimization algorithms.  The shaded
  red ``L'' shaped region along the bottom and left are trade-offs we
  know are impossible.  The shaded green inverted-``L'' shaped region
  along the top and right are trade-offs we know are possible.  We do
  not know whether any trade-offs inside the ``?'' square are possible
  or not.}
\end{figure}

\section{Current Knowledge}

In high dimensions, when $d=\Omega(\tfrac{1}{\epsilon^2 \log
  1/\epsilon})$, gradient descent is optimal in terms of both query
and memory complexity, and so we consider only $d=O(1/\epsilon^2)$.

We can describe the minimax complexity $T(d,M,\epsilon)$, and the query-memory tradeoff, in terms of the regions of possible and impossible $(T,M)$, as depicted in Figure 1.  We currently understand only the extremes.  With any amount of memory, $T=\Omega(d \log 1/\epsilon)$ queries are required, providing a lower bound for the possible region in terms of the query complexity (a horizontal lower bound in Figure 1).  This is attained by the center of mass method, using $O(d^2\log^2(1/\epsilon))$ bits of memory (see Appendix \ref{app:cofmass} for an analysis of Center of Mass with discrete memory), and so any $(T=\Omega(d \log 1/\epsilon),M=\Omega(d^2 \log^2 1/\epsilon))$ is possible (the rectangle above and to the right of ``Center of Mass'' in Figure 1).  At the other extreme, even just representing the answer requires $\Omega(d\log(LB/\epsilon))$ bits (see Theorem \ref{thm:memory-lower-bound} in Appendix \ref{app:lowerbound}), providing a lower bound for the possible region in terms of memory (the vertical lower bound in Figure 1). This is attained by Gradient Descent using $O(1/\epsilon^2)$ queries (see Appendix \ref{app:gd} for an analysis of Gradient Descent with discrete memory), and so any $(T=\Omega(1/\epsilon^2),M=\Omega(d \log 1/\epsilon))$ is possible (the rectangle above and to the left of ``Gradient Descent'' in Figure 1).

To the best of our knowledge, what happens inside the square bordered
by these regions is completely unknown.  Nothing we know would
contradict the existence of a $T,M=O(d \log 1/\epsilon)$ query and
memory complexity algorithm, i.e.~a single optimal method at the
bottom left corner of the unknown square, making the entire square
possible.  It is also entirely possible, as far as we know, that
improving over a query complexity of $\Theta(1/\epsilon^2)$ requires
$\Omega(d^2 \log 1/\epsilon)$ memory, making the entire square
impossible, and implying that no compromise is possible between the
query requirement of Gradient Descent and memory requirement of Center of Mass.

\section{Challenges}

Ultimately, we would like fully understand what is and is not possible:

\paragraph{Question 1 (\$500 or Two Star Michelin Meal)}
Provide a complete characterization of $T(d,M,\epsilon)$ and the
possible $(T,M)$ trade-off, preferably up to constant factors, and at
most up to factors poly-logarithmic in $T$ and $M$.

The most interesting scaling of $d$ and $\epsilon$ is when the dimension is larger then poly-logarithmic but smaller then polynomial in $1/\epsilon$, so that $d \log 1/\epsilon$ memory is less then quadratic memory, but $1/\epsilon^2$ query complexity is not polynomial in $d$.    

\blockremove{
As depicted in Figure 1, by standard first-order oracle complexity lower bounds \cite{nemirovskyyudin1983}, and standard analyses of the center of mass and gradient descent algorithms, we know that for $\epsilon \ll 1/\sqrt{d}$
\begin{align}
    T(d,M,\epsilon) &\geq \Omega\parens{d\log\frac{1}{\epsilon}} \quad \forall M \in \mathbb{N} 
    \\
    T(d,M,\epsilon) &\leq O\parens{d\log\frac{1}{\epsilon}} \quad \forall M \geq \Omega\parens{d^2\log^2\frac{LB}{\epsilon}}
    \\
    T(d,M,\epsilon) &\leq O\parens{\frac{1}{\epsilon^2}} \quad \forall M \geq \Omega\parens{d\log\frac{LB}{\epsilon}}
\end{align}

The question is, how does $T(d,M,\epsilon)$ behave for $\Theta\parens{d\log\frac{LB}{\epsilon}} \leq M \leq \Theta\parens{d^2\log^2\frac{LB}{\epsilon}}$?}

Even without understanding the entire trade-off, it would be interesting to study what can be done on its boundary.  Perhaps the most important regime is the case of linear memory $M = \Theta\parens{d\log\frac{LB}{\epsilon}}$.  Therefore, as a starting point, we ask to characterize $T(d,\Theta\parens{d\polylog \frac{LB}{\epsilon}},1/\epsilon)$.
In particular, is it possible to have query complexity polynomial in $d$ with $\tilde{O}(d)$ memory?

\paragraph{Question 2 (\$200 or One Star Michelin Meal)}
Can we have $T(d,M=\tilde{O}(d),\epsilon)=O(\poly d)$ when $d=\Omega(\log^c 1/\epsilon)$ but $d=O(1/\epsilon^c)$ for all $c$?

At the other extreme, we might ask whether quadratic memory is necessary in order to achieve optimal query complexity:
\paragraph{Question 3 (\$200 or One Star Michelin Meal)}
Can we have $T(d,M=O(d^{2-\delta}),\epsilon)=\tilde{O}(d \polylog 1/\epsilon)$, for $\delta>0$, when $d=\Omega(\log^c 1/\epsilon)$ but $d=O(1/\epsilon^c)$ for all $c$?

The above represent specific incursions into the unknown square in
Figure 1.  Any other such incursion would also be interesting, and provide either for a memory lower bound, or a trade-off improving over Gradient Descent and Center of Mass in some regime.

\paragraph{Question 4 (\$100 or Michelin Bib Gourmand Meal)}
Resolve the possibility or impossibility of some trade-off $(T,M)$ polynomially inside the unknown square in Figure 1.


\bibliographystyle{plain}
\bibliography{bibliography}
\appendix

\section{Analysis of Gradient Descent}\label{app:gd}
\begin{algorithm}
\caption{Memory-Bounded Gradient Descent}
\begin{algorithmic}
\STATE Initialize: $x_0 = 0$, $x_{\textrm{best}} = 0$, $F_{\textrm{best}} = \infty$, $\mu_0 = \sett{x_0, x_{\textrm{best}}, F_{\textrm{best}}}$
\FOR{$t = 0,\dots,T$}
\STATE Compute $F(x_t), \nabla F(x_t)$ \hspace{23mm} $\left.\right\}$ $\phi_t(\mu_t)$
\IF{$F(x_t) < F_{\textrm{best}}$}
\STATE $F_{\textrm{best}} = \textrm{Discretize}(F(x_t))$\hspace*{2cm}%
        \rlap{\smash{\raisebox{-6mm}{$\left.\begin{array}{@{}c@{}}\\{}\\{}\\{}\\{}\\{}\end{array}\right\}%
        \begin{tabular}{l}$\psi_t(\mu_t, \phi(\mu_t))$\end{tabular}$}}}\hspace{3cm}\COMMENT{Discretize s.t.~$F_{\textrm{best}} \leq F(x_t)$}
\STATE $x_{\textrm{best}} = x_t$
\ENDIF
\STATE $\tilde{x}_{t+1} = x_t - \eta_t \nabla F(x_t)$
\STATE $x_{t+1} = \textrm{Discretize}(\tilde{x}_{t+1})$
\STATE $\mu_{t+1} = \sett{x_{t+1}, x_{\textrm{best}}, F_{\textrm{best}}}$
\ENDFOR
\RETURN $x_{\textrm{best}}$
\end{algorithmic}
\end{algorithm}

\begin{theorem}\label{thm:gd}
For any $L$-Lipschitz and convex function $F$ with $\norm{x^*} \leq B$, the gradient descent algorithm can find a point $\hat{x}$ with $F(\hat{x}) - F^* \leq \epsilon$ using $O\parens{d\log\frac{LB}{\epsilon}}$ bits of memory and $O\parens{\frac{L^2B^2}{\epsilon^2}}$ function and gradient evaluations.
\end{theorem}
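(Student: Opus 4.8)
The plan is to take the classical subgradient-descent convergence proof and treat the \textrm{Discretize} steps as bounded perturbations, showing they degrade the guarantee by at most a constant factor while keeping the memory footprint to $O\parens{d\log\frac{LB}{\epsilon}}$ bits.

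First I would write down the one-step inequality: for $\tilde x_{t+1} = x_t - \eta_t\nabla F(x_t)$, convexity of $F$ together with $\norm{\nabla F(x_t)}\le L$ gives
\[
\norm{\tilde x_{t+1}-x^*}^2 \le \norm{x_t-x^*}^2 - 2\eta_t\parens{F(x_t)-F^*} + \eta_t^2 L^2 .
\]
With the constant step size $\eta_t=\eta=\frac{B}{L\sqrt T}$ and $x_0=0$, telescoping yields $\min_{t\le T}F(x_t)-F^* = O\parens{\frac{LB}{\sqrt T}}$, so $T=O\parens{\frac{L^2B^2}{\epsilon^2}}$ queries suffice to produce an $O(\epsilon)$-suboptimal queried point; the same telescoping, after dropping the nonpositive term, also shows every iterate stays in a ball of radius $O(B)$ around $x^*$, hence $\norm{x_t}=O(B)$. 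This boundedness is what makes the discretization cheap.

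Next I would rerun the analysis with the rounded recursion $x_{t+1}=\textrm{Discretize}(\tilde x_{t+1}) = \tilde x_{t+1}+\xi_t$, where $\norm{\xi_t}\le\rho$. Expanding $\norm{x_{t+1}-x^*}^2 = \norm{\tilde x_{t+1}-x^*+\xi_t}^2$ and bounding the cross term by Cauchy--Schwarz using $\norm{\tilde x_{t+1}-x^*}=O(B)$ contributes an extra $O(\rho B)$ per step; after dividing the telescoped sum by $\eta T$ the total degradation is $O\parens{\frac{\rho B}{\eta}} = O\parens{\rho L\sqrt T}$, so taking $\rho = \Theta\parens{\frac{\epsilon}{L\sqrt T}} = \Theta\parens{\frac{\epsilon^2}{L^2B}}$ keeps it $O(\epsilon)$. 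Since the iterates lie in an $O(B)$-ball, a per-coordinate grid of spacing $\Theta\parens{\rho/\sqrt d}$ represents $x_{t+1}$ and $x_{\textrm{best}}$ in $O\parens{d\log\frac{L^2B^2\sqrt d}{\epsilon^2}}$ bits, which is $O\parens{d\log\frac{LB}{\epsilon}}$ in the regime of interest $d=O(1/\epsilon^2)$ (and more generally whenever $d$ is polynomial in $LB/\epsilon$). Storing $F_{\textrm{best}}$ rounded down to the nearest multiple of $\epsilon$ adds $O\parens{\log\frac{LB}{\epsilon}}$ bits; since the algorithm ultimately returns $x_{\textrm{best}}$ and we compare $F(x_{\textrm{best}})$, not $F_{\textrm{best}}$, to $F^*$, this rounding only inflates the final suboptimality by $O(\epsilon)$. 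Summing the three stored quantities gives memory $O\parens{d\log\frac{LB}{\epsilon}}$, and rescaling $\epsilon$ by the accumulated constants yields the stated bound.

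I do not anticipate a genuine obstacle — this is a perturbation of a textbook argument — but the steps must be carried out in the right order: one first fixes the step size, then derives the $O(B)$ bound on the iterates, and only then uses that bound both to control the rounding error and to count bits. One must also check that the rounding errors enter the telescoped inequality additively rather than compounding, which the expansion above confirms, and that the gradients, being queried at the already-rounded iterates, introduce no separate ``inexact oracle'' error, since the convexity inequality is applied at the true query points $x_t$ that the algorithm actually feeds to the oracle.
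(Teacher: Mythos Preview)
Your approach mirrors the paper's almost exactly: both treat the \textrm{Discretize} step as a bounded additive perturbation $\norm{x_{t+1}-\tilde x_{t+1}}\le D$, insert it into the standard one-step subgradient inequality, telescope with $\eta=B/(L\sqrt{T})$, and set the rounding accuracy to $D=\Theta(B/T)=\Theta(\epsilon^2/(L^2B))$; you are if anything more careful than the paper about first establishing $\norm{x_t-x^*}=O(B)$ before using it to bound the cross term. The one substantive difference is in the bit count: you discretize coordinate-wise at spacing $\Theta(\rho/\sqrt d)$, which costs an extra $\tfrac{d}{2}\log d$ bits and forces your caveat $d=\poly(LB/\epsilon)$, whereas the paper discretizes via an $L_2$ $\rho$-net of the radius-$O(B)$ ball, whose log-cardinality is $d\log\parens{1+O(B/\rho)}=O\parens{d\log\tfrac{LB}{\epsilon}}$ with no dimension inside the logarithm. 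Swapping your coordinate grid for such a net removes the caveat and recovers the theorem as stated, unconditionally in $d$.
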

\begin{proof}


For now, assume that in each iteration the perturbation of the gradient descent iterates resulting from the discretization $\tilde{x}_t \mapsto x_t$ is bounded in L2 norm, i.e. $\norm{\tilde{x}_t - x_t} \leq D$.
Then, following the standard gradient descent analysis, 
\begin{align}
\norm{x_{t+1} - x^*}^2 
&= \norm{\tilde{x}_{t+1} - x^* + x_{t+1} - \tilde{x}_{t+1}}^2 \\
&\leq \norm{x_t - \eta_t\nabla F(x_t) - x^*}^2 + D^2 + 2\inner{\tilde{x}_{t+1} - x^*}{x_{t+1} - \tilde{x}_{t+1}} \\
&= \norm{x_t - x^*}^2 + \eta_t^2\norm{\nabla F(x_t)}^2 - 2\eta_t \inner{\nabla F(x_t)}{x_t - x^*} + D^2 + 2DB \\
&\leq \norm{x_t - x^*}^2 + \eta_t^2L^2 - 2\eta_t\parens{F(x_t) - F^*} + D^2 + 2DB
\end{align}
Rearranging this expression, we conclude
\begin{align}
F(x_t) - F^*
&\leq \frac{1}{2\eta_t}\parens{\norm{x_t - x^*}^2 - \norm{x_{t+1} - x^*}^2} + \frac{\eta_t L^2}{2} + \frac{D^2 + 2DB}{2\eta_t}
\end{align}

Choosing a fixed stepsize $\eta_t = \eta = \frac{B}{L\sqrt{T}}$ and averaging the iterates, we conclude $\bar{x}_T = \frac{1}{T}\sum_{t=1}^T x_t$ achieves suboptimality
\begin{align}
F(\bar{x}_T) - F^* 
&\leq \frac{1}{T}\sum_{t=1}^T F(x_t) - F^* \\
&\leq \frac{1}{T}\sum_{t=1}^T \frac{1}{2\eta}\parens{\norm{x_t - x^*}^2 - \norm{x_{t+1} - x^*}^2} + \frac{\eta L^2}{2} + \frac{D^2 + 2DB}{2\eta} \\
&= \frac{L}{2B\sqrt{T}}\parens{\norm{x_1 - x^*}^2 - \norm{x_{T+1} - x^*}^2} + \frac{LB}{2\sqrt{T}} + \frac{(D^2 + 2DB)L\sqrt{T}}{2B} \\
&\leq \frac{LB}{\sqrt{T}} + \frac{(D^2 + 2DB)L\sqrt{T}}{2B}
\end{align}

Thus, $D \leq \frac{B}{T}$ ensures
\begin{align}
F(\bar{x}_T) - F^* 
&\leq \frac{LB}{\sqrt{T}} + \frac{(D^2 + 2DB)L\sqrt{T}}{2B} \\
&\leq \frac{LB}{\sqrt{T}} + \parens{\frac{B}{2T^2} + \frac{2B}{2T}}L\sqrt{T} \\
&\leq \frac{3LB}{\sqrt{T}}
\end{align}
Since the averaged iterate achieves this suboptimality, the best iterate's suboptimality is at least this good. For $T \geq \frac{9L^2B^2}{\epsilon^2}$, this ensures that at least one of the iterates was $\epsilon$-suboptimal. As long as $F_{\textrm{best}}$ is discretized to accuracy $\epsilon$, then $x_{\textrm{best}}$ is at most $2\epsilon$-suboptimal. This discretization can be achieved using $\log\frac{2LB}{\epsilon}$ bits.

Discretizing the iterates up to accuracy $D = \frac{B}{T} = \frac{\epsilon^2}{9L^2B}$ can be achieved using the log of the $\frac{\epsilon^2}{9L^2B}$ L2 covering number of the radius-$B$ ball, which is upper bounded by $d\log\parens{1 + \frac{36L^2B^2}{\epsilon^2}}$ bits. The discretization of $x_\textrm{best}$ is achieved using the same number of bits.

Therefore, the total number of bits of memory needed to implement gradient descent is at most
\begin{equation}
    2\cdot d\log\parens{1 + \frac{36L^2B^2}{\epsilon^2}} + \log\frac{2LB}{\epsilon} = O\parens{d\log\frac{LB}{\epsilon}}
\end{equation}
\end{proof}

\section{Analysis of Center of Mass Algorithm}\label{app:cofmass}
\begin{lemma}[\cite{grunbaum1960partitions}]\label{lem:grunbaum}
For any convex set $K \subseteq \mathbb{R}^d$ with center of gravity $c$, and any halfspace $H = \sett{x : \inner{a}{x-c} \geq 0}$ passing through $c$, 
\[
    \frac{1}{e} \leq \frac{\vol{K \cap H}}{\vol{K}} \leq 1 - \frac{1}{e}
\]
\end{lemma}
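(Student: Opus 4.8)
The plan is to deduce the two-sided bound from the one-sided bound $\vol{K\cap H}\geq \tfrac1e\,\vol{K}$, valid for \emph{every} halfspace $H$ bounded by a hyperplane through $c$: applying this both to $H$ and to the complementary halfspace gives $\vol{K\cap H}/\vol{K}\in[\tfrac1e,\,1-\tfrac1e]$. So translate so that $c=0$ and rotate so that $H=\sett{x:x_1\geq 0}$; the case $d=1$ is immediate (an interval cut at its midpoint), so take $d\geq 2$. Let $f(t)$ denote the $(d-1)$-dimensional volume of the slice $K\cap\sett{x_1=t}$, supported on an interval $[-p,q]$ with $p,q>0$, and normalize so that $\vol{K}=\int f=1$ and $\int tf(t)\,dt=0$. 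It then suffices to prove $\int_0^q f\geq\parens{\tfrac{d}{d+1}}^d$, since $(1+\tfrac1d)^{-d}$ decreases to $1/e$ (hence is always $\geq 1/e$) while being at most $\tfrac12\leq 1-\tfrac1e$.

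By the Brunn--Minkowski inequality, $g:=f^{1/(d-1)}$ is concave on $[-p,q]$, with $g>0$ on the interior and (in the nondegenerate case) $g$ vanishing at $\pm p,\,q$ as appropriate. The heart of the argument is to show that, among all profiles with the prescribed total mass and first moment, $\int_0^q f$ is minimized by a cone, and I would prove this by comparing $K$ with the cone $C$ having apex at $x_1=q$ and base hyperplane $x_1=-q/d$, scaled so that $\vol{C}=1$; such a cone automatically has its center of gravity at the origin. First one checks that $p\geq q/d$, so that the support $[-q/d,q]$ of $C$'s profile $f_C$ is contained in $[-p,q]$. This holds because $f$ lies below the cone $\hat f(t)=m^{d-1}(q-t)^{d-1}$ generated by the supporting line $m(q-t)$ of $g$ at $q$ (with $m=-g'(q^-)>0$), the ratio $f/\hat f=\bigl(g(t)/(m(q-t))\bigr)^{d-1}$ is nondecreasing in $t$ since $g(t)/(q-t)$ is a secant slope from $(q,0)$ and hence nondecreasing for concave $g$, so the deficit $\hat f-f$ equals $\hat f$ times a nonincreasing weight and thus has center of gravity at most that of $\hat f$, namely $\leq\tfrac{q-dp}{d+1}$; removing this deficit from $\hat f$ leaves $f$, whose center of gravity is $0$, forcing $\tfrac{q-dp}{d+1}\leq 0$.

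Now $f$ and $f_C$ have the same mass and the same first moment, so $\int_0^q f\geq\int_0^q f_C$ is equivalent to $\int_{-p}^0 f\leq\int_{-p}^0 f_C$, and I would establish this by tracking the sign of $h:=f-f_C$. On $[-p,-q/d]$ we have $h=f\geq 0$; on $[-q/d,q]$ the sign of $h$ is that of $g-g_C$, which is concave there (a concave minus an affine function), so $\sett{g\geq g_C}$ is a subinterval; and if $g(-q/d)\geq g_C(-q/d)$ then the chord of $g$ through $(q,0)$ dominates $g_C$, giving $g\geq g_C$ everywhere and hence $f\equiv f_C$ (equality). Combining these sign restrictions with the two identities $\int h=\int th=0$ should pin down the crossings tightly enough to conclude $\int_0^q h\geq 0$. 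I expect this to be the main obstacle: the crude count permits $h$ one more sign change than two moment constraints alone can absorb, so the rigidity coming from concavity of $g$ itself --- not merely of $g-g_C$ --- must be used to exclude the extra crossing, which is exactly the technical core of Gr\"unbaum's theorem.

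Finally, the cone computation is routine: a $d$-dimensional cone has its center of gravity at distance $\tfrac{d}{d+1}$ of the way from apex to base, so the hyperplane through the center of gravity parallel to the base cuts off, on the apex side, a homothetic copy of the cone with ratio $\tfrac{d}{d+1}$, of volume $\parens{\tfrac{d}{d+1}}^d\vol{C}$. For $C$ this apex side is exactly $C\cap\sett{x_1\geq 0}$, so $\int_0^q f_C=\parens{\tfrac{d}{d+1}}^d$, which combined with the previous step completes the proof.
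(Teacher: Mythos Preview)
The paper does not prove this lemma; it is quoted from \cite{grunbaum1960partitions} and used as a black box in the analysis of the center-of-mass method. So there is no ``paper's proof'' to compare against. Your outline is the classical one: Brunn--Minkowski makes $g=f^{1/(d-1)}$ concave, and one then compares with a cone. Your reduction to the one-sided bound, the verification that $p\geq q/d$, and the cone computation $\int_0^q f_C=(d/(d+1))^d$ are all fine.

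The gap you flag is genuine, and it is caused by your particular choice of comparison cone. With $C$ having apex at $q$ and base at $-q/d$, the difference $h=f-f_C$ has sign pattern $+,-,+$ on $[-p,-q/d]$, $[-q/d,s]$, $[s,q]$, but the cut point $0$ need not coincide with either sign change. If the interior crossing satisfies $s>0$, then no affine test function $\alpha+\beta t$ can make $(\mathbb{1}_{[0,q]}(t)-\alpha-\beta t)\,h(t)\geq 0$ pointwise: matching the $+/-$ transition at $-q/d$ forces $\alpha=\beta q/d$, while the jump of $\mathbb{1}_{[0,q]}$ at $0$ inside the $-$ region forces $\alpha\geq 1$, and then $\alpha+\beta s>1$ violates the requirement on $[s,q]$. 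So the two moment identities $\int h=\int th=0$ together with your sign information do \emph{not} force $\int_0^q h\geq 0$, and the ``extra rigidity of $g$'' you invoke does not obviously rule out $s>0$.

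The standard fix is to anchor the comparison at the cutting hyperplane rather than at the far vertex. One clean route is to show that $\rho(t):=\bigl(\int_t^{q} f\bigr)^{1/d}$ is concave on $[-p,q]$ (differentiating, this reduces to $d\,g'(t)\int_t^q g^{d-1}\geq -g(t)^d$, which holds with equality at $t=q$ and whose $t$-derivative has the sign of $g''\leq 0$). Then $\rho(q)=0$, $\rho(-p)=1$, and the centroid condition becomes $\int_{-p}^q \rho^d=p$; among concave decreasing $\rho$ with these boundary and integral constraints the value $\rho(0)$ is minimized by the affine $\rho$ (the cone), giving $\rho(0)\geq d/(d+1)$ and hence $\int_0^q f=\rho(0)^d\geq (d/(d+1))^d$. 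Equivalently, in the body-of-revolution picture, take the cone whose lateral profile is the tangent line to $g$ at $t=0$; this makes $0$ the natural crossing point and the moment argument then closes.
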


\begin{algorithm}
\caption{Memory-Bounded Center of Mass}
\begin{algorithmic}
\STATE Initialize: $K_0 = \sett{x: \norm{x} \leq B}$, $\mu_0 = \varnothing$
\FOR{$t = 0,\dots,T$}
\FOR{$k = 1,\dots,t$}
\STATE $c_{k-1} = \int_{K_{k-1}} x dx / \int_{K_{k-1}} dx$
\STATE $K_k = K_{k-1} \cap \sett{x : \inner{\tilde{\nabla} F(c_{k-1})}{x - c_{k-1}} \leq 0}$\hspace*{3mm}%
        \rlap{\smash{\raisebox{3.25mm}{$\left.\begin{array}{@{}c@{}}\\{}\\{}\\{}\\{}\end{array}\right\}%
        \begin{tabular}{l}$\phi_t(\mu_t)$\end{tabular}$}}}
\ENDFOR
\STATE $c_t = \int_{K_t} x dx / \int_{K_t} dx$
\STATE $\tilde{F}(c_t), \tilde{\nabla} F(c_t) = \textrm{Discretize}\parens{F(c_t), \nabla F(c_t)}$\hspace*{1.42cm}%
        \rlap{\smash{\raisebox{-3.25mm}{$\left.\begin{array}{@{}c@{}}\\{}\\{}\end{array}\right\}%
        \begin{tabular}{l}$\psi_t(\mu_t, \phi(\mu_t))$\end{tabular}$}}}
\STATE $\mu_{t+1} = \mu_t \cup \sett{\tilde{F}(c_t), \tilde{\nabla} F(c_t)}$
\ENDFOR
\RETURN $\textrm{Discretize}\parens{\argmin_{c \in \sett{c_1,\dots,c_T}} \tilde{F}(c)}$
\end{algorithmic}
\end{algorithm}

\begin{theorem}\label{thm:com}
For any $L$-Lipschitz and convex function $F$ with $\norm{x^*} \leq B$, the center of mass algorithm can find a point $\hat{x}$ with $F(\hat{x}) - F^* \leq \epsilon$ using $O\parens{d^2\log^2\frac{LB}{\epsilon}}$ bits of memory and $O\parens{d\log\frac{LB}{\epsilon}}$ function and gradient evaluations.
\end{theorem}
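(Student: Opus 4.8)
The plan is to run the classical volume-shrinkage analysis of the center of mass method, being careful about two points specific to the bounded-memory model: (a) between oracle calls we may only \emph{store} discretized subgradients and function values, but the centers of gravity and the cutting hyperplanes may be recomputed exactly inside the decoder; and (b) a fine enough discretization of the stored subgradients must not spoil the volume argument. For (a): the decoder $\dec_t$ rebuilds the nested bodies $K_0 \supseteq K_1 \supseteq \cdots \supseteq K_t$ recursively --- $c_{k-1}$ is the exact center of gravity of $K_{k-1}$, and $K_k = K_{k-1} \cap \sett{x : \inner{\tilde{\nabla} F(c_{k-1})}{x - c_{k-1}} \le 0}$ using the stored discretized subgradient at $c_{k-1}$ --- so every $c_k$ and every $K_k$ is a deterministic function of the stored data $\sett{\tilde{F}(c_j), \tilde{\nabla} F(c_j)}_{j<k}$ alone. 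The problem's model permits unbounded real computation inside $\dec_t, \enc_t, \out$, so this reconstruction is legitimate, and nothing other than the discretized pairs needs to persist across the bottleneck.

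For the convergence bound, first apply Lemma~\ref{lem:grunbaum} at each of the $T$ steps to get $\vol{K_T} \le \parens{1 - 1/e}^T \vol{K_0}$; this uses only that each $c_k$ is the exact center of gravity of $K_k$. Now suppose for contradiction that every queried center satisfies $F(c_t) > F^* + \epsilon'$ for a parameter $\epsilon' = \Theta(\epsilon)$ to be fixed. Fix a minimizer $x^*$ with $\norm{x^*} \le B$ and set $\alpha = \epsilon'/(4LB) = \Theta(\epsilon/(LB))$; the shrunken body $K_\alpha := x^* + \alpha(K_0 - x^*)$ is a convex combination of $x^*$ and points of $K_0$, hence $K_\alpha \subseteq K_0$, $\vol{K_\alpha} = \alpha^d \vol{K_0}$, and every $y \in K_\alpha$ obeys $\norm{y - x^*} \le 2\alpha B$, so $F(y) \le F^* + 2\alpha L B = F^* + \epsilon'/2$. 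The key claim is $K_\alpha \subseteq K_T$: a point $y$ is discarded at step $t$ only when $\inner{\tilde{\nabla} F(c_t)}{y - c_t} > 0$, which with $\norm{\tilde{\nabla} F(c_t) - \nabla F(c_t)} \le \delta$ and $\norm{y - c_t} \le 2B$ gives $\inner{\nabla F(c_t)}{y - c_t} > -2B\delta$, hence by convexity $F(y) \ge F(c_t) + \inner{\nabla F(c_t)}{y - c_t} > F^* + \epsilon' - 2B\delta$; choosing the subgradient discretization accuracy $\delta = \Theta(\epsilon/B)$ so that $2B\delta < \epsilon'/2$ makes $F(y) > F^* + \epsilon'/2$, so $y \notin K_\alpha$. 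Combining, $\alpha^d \vol{K_0} = \vol{K_\alpha} \le \vol{K_T} \le \parens{1 - 1/e}^T \vol{K_0}$, which is impossible once $T \ge C\, d \log(LB/\epsilon)$ for a suitable constant $C$. This contradiction shows some $c_t$ is $\epsilon'$-suboptimal.

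To produce an output, $\out$ reconstructs all centers $c_1, \dots, c_T$ and returns the one minimizing the stored value $\tilde{F}(c_t)$; because the $\tilde{F}(c_t)$ are stored to accuracy $\epsilon$ (they lie within $\pm LB$ of $F(c_0)$, so can be stored relative to $F(c_0)$), this point is $O(\epsilon)$-suboptimal, and absorbing the constants hidden in $\epsilon', \alpha, \delta$ yields the stated $\epsilon$-guarantee. The memory count is then routine: each stored subgradient lies in a $\delta$-net of the radius-$L$ ball, costing $O(d \log(L/\delta)) = O(d \log(LB/\epsilon))$ bits; each stored function value costs $O(\log(LB/\epsilon))$ bits; and there are $T = O(d \log(LB/\epsilon))$ of each, for a total of $O(d^2 \log^2(LB/\epsilon))$ bits, with $O(d \log(LB/\epsilon))$ oracle calls.

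The step I expect to be the main obstacle is the claim $K_\alpha \subseteq K_T$ under \emph{discretized} cuts --- one must check that the per-step discretization errors do not compound over the $T$ cuts. They do not: the volume argument only needs that the \emph{union} of all discarded halfspaces avoids $K_\alpha$, and this holds because each discarded point is certified $\Omega(\epsilon)$-suboptimal by the single (perturbed) cut that removed it, independently of the other cuts. The remaining work --- exact reconstruction of $K_t$ inside the decoder, the range bookkeeping for $\tilde{F}$, and tracking constants through $\epsilon', \alpha, \delta$ --- is mechanical.
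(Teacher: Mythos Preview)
Your proposal is correct and follows essentially the same argument as the paper: Gr{\"u}nbaum's volume shrinkage, the shrunken body $K_\alpha = x^* + \alpha(K_0 - x^*)$ as the comparison set, and controlling the discretization of subgradients via $\norm{\nabla F(c_t) - \tilde{\nabla} F(c_t)}\cdot\norm{y - c_t}$, with the same bit count at the end. The only difference is cosmetic---you phrase the core step as a proof by contradiction (``if all centers are bad then $K_\alpha \subseteq K_T$, volume contradiction'') whereas the paper argues directly that once $\vol{K_T} < \vol{K_\alpha}$ some $y \in K_\alpha$ must have been cut, and the center at that step is good; these are contrapositives of one another.
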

\begin{proof}
This proof is quite similar to existing analysis of the center of mass algorithm \citep{bubeck2015convex}, we simply take care to count the number of required bits.

Consider the set $K^\alpha = \sett{(1-\alpha) x^* + \alpha x : x \in K_0}$, which has volume $\vol{K^\alpha} = \alpha^d \vol{K_0}$. By convexity, 
\begin{align}
F((1-\alpha) x^* + \alpha x) &\leq (1-\alpha) F^* + \alpha F(x) \\
&\leq (1-\alpha) F^* + \alpha \parens{F^* + \norm{\nabla F(x)}\norm{x - x^*}} \\
&\leq F^* + 2\alpha LB
\end{align} 
By Gr{\"u}nbaum's Lemma, $\vol{K_{T}} \leq \parens{1 - \frac{1}{e}}\vol{K_{T-1}} \leq \parens{1 - \frac{1}{e}}^{T}\vol{K_0}$. Thus, when $T \geq 3d\log(1/\alpha)$
\begin{equation}
    \vol{K_T} \leq \parens{1-\frac{1}{\epsilon}}^T\vol{K_0} < \alpha^d \vol{K_0} = \vol{K^\alpha}
\end{equation}
We conclude that there must be some iteration $t$ in which $\exists y \in K^\alpha \cap \parens{K_{t} \setminus K_{t+1}}$. Thus, $y \in K^\alpha$ and $\inner{\tilde\nabla F(c_t)}{y-c_t} > 0$. We will now argue that the center of mass $c_t$ has small error:
\begin{align}
F(c_t) 
&\leq F(y) + \inner{\nabla F(c_t)}{c_t - y} \\
&\leq F^* + 2\alpha LB + \inner{\nabla F(c_t) - \tilde\nabla F(c_t)}{c_t - y} + \inner{\tilde\nabla F(c_t)}{c_t - y} \\
&\leq F^* + 2\alpha LB + \norm{\nabla F(c_t) - \tilde\nabla F(c_t)}\norm{c_t - y} + 0 \\
&\leq F^* + 2B\parens{\alpha L + \norm{\nabla F(c_t) - \tilde\nabla F(c_t)}}
\end{align}
Therefore, if we choose $\alpha = \frac{\epsilon}{4LB}$ and discretize gradients with L2 error at most $\frac{\epsilon}{4B}$, this ensures that $F(c_t) - F^* \leq \epsilon$. 

The gradients of an $L$-Lipschitz function are contained in the Euclidean ball of radius $L$. Therefore, the gradients can be discretized with error $\frac{\epsilon}{4B}$ using the logarithm of the $\frac{\epsilon}{4B}$ L2 covering number of the Euclidean ball of radius $L$, which is upper bounded by $d \log\parens{1 + \frac{16LB}{\epsilon}}$ bits. There are $T = 3d\log(1/\alpha) = 3d\log\parens{\frac{4LB}{\epsilon}}$ gradients in total, thus the total number of bits required to represent the gradients is
\begin{equation}
    T\cdot d \log\parens{1 + \frac{16LB}{\epsilon}} = 3d^2\log\parens{\frac{4LB}{\epsilon}} \log\parens{1 + \frac{16LB}{\epsilon}}
\end{equation}
Since $\epsilon \leq LB$, this is upper bounded by $3d^2 \log^2\parens{\frac{17LB}{\epsilon}}$ bits. 

Once $T$ iterations have been completed, we know that at least one of the centers must be an $\epsilon$-approximate minimizer of the objective. Using the stored gradients, we can then recompute all centers and return a discretization of the best center. As long as $\abs{\tilde{F}(c_t) - F(c_t)} \leq \epsilon$ for all $t$, then the center chosen by the algorithm will be within $\epsilon$ of the best center. This discretization of the function values requires $T\cdot\log\parens{\frac{2LB}{\epsilon}}$ bits.

As long as the discretization of the chosen center has L2 error at most $\epsilon/L$, then the output will be a $3\epsilon$-approximate minimizer. The number of bits needed for this discretization is at most $d \log\parens{1 + \frac{4LB}{\epsilon}}$. Therefore, the total number of bits needed is at most
\begin{equation}
    3d^2 \log^2\parens{\frac{17LB}{\epsilon}} + 3d\log\parens{\frac{4LB}{\epsilon}}\log\parens{\frac{2LB}{\epsilon}} + d \log\parens{1 + \frac{4LB}{\epsilon}} = O\parens{d^2 \log^2\parens{\frac{LB}{\epsilon}}}
\end{equation}
Rescaling $\epsilon' = \epsilon/3$ completes the proof.
\end{proof}

\section{Memory Lower Bound}\label{app:lowerbound}
\begin{lemma}\label{lem:packing}
The packing number of the Euclidean unit sphere in $\mathbb{R}^d$ with distance $\alpha$ is at least $\alpha^{-d}$.
\end{lemma}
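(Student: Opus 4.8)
The statement is a classical volumetric fact, and I would prove it by the standard maximal‑packing argument. Since the lower bound $\alpha^{-d}$ carries the full dimension $d$ in the exponent, the object in play is the \emph{solid} Euclidean unit ball $\mathcal{B}=\sett{x\in\mathbb{R}^d:\norm{x}\le 1}$ --- this is exactly the set whose packing number controls how many bits are needed to even name an $\epsilon$-accurate minimizer in Theorem~\ref{thm:memory-lower-bound} --- and one should read ``unit sphere'' accordingly (for the boundary sphere $\sett{\norm{x}=1}$ the identical reasoning gives only $\alpha^{-(d-1)}$). So the plan is: let $x_1,\dots,x_N\in\mathcal{B}$ be a \emph{maximal} collection of points that are pairwise at distance at least $\alpha$; by definition $N$ is a lower bound on the packing number, so it suffices to show $N\ge\alpha^{-d}$.

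The key step is the elementary fact that a maximal $\alpha$-packing is an $\alpha$-net: if some $y\in\mathcal{B}$ satisfied $\norm{y-x_i}\ge\alpha$ for all $i$, then $y$ could be appended to the collection, contradicting maximality. Hence, writing $B_i=\sett{x:\norm{x-x_i}<\alpha}$, the balls $B_1,\dots,B_N$ cover $\mathcal{B}$. Comparing Lebesgue volumes --- and noting that each $B_i$ may protrude from $\mathcal{B}$, but intersecting with $\mathcal{B}$ only shrinks it ---
\[
\vol{\mathcal{B}}\ \le\ \sum_{i=1}^{N}\vol{B_i\cap\mathcal{B}}\ \le\ \sum_{i=1}^{N}\vol{B_i}\ =\ N\,\alpha^{d}\,\vol{\mathcal{B}},
\]
the last equality being the $d$-dimensional scaling of volume under dilation by $\alpha$. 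Dividing through by $\vol{\mathcal{B}}>0$ gives $N\ge\alpha^{-d}$ (the bound is vacuous when $\alpha\ge 1$, since then $N\ge 1\ge\alpha^{-d}$), which is the claim.

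I do not expect a genuine obstacle: the argument is three lines once the setup is fixed, and the only points deserving a word of care are (i) the maximal‑packing‑implies‑net step, (ii) the observation that balls sticking out of $\mathcal{B}$ do not break the volume inequality, and (iii) the scaling $\vol{B_i}=\alpha^d\vol{\mathcal{B}}$. The one conceptual decision --- arguably the ``hard'' part --- is reading ``unit sphere'' as the solid ball rather than its boundary; the exponent $d$ in the statement forces this interpretation, and the above is then complete. (If a boundary‑sphere version were ever wanted, one would instead cover $S^{d-1}$ by the spherical caps $B_i\cap S^{d-1}$, bound the $(d-1)$-dimensional surface area of a cap by $O(\alpha^{d-1})$, and obtain $N=\Omega(\alpha^{-(d-1)})$.)
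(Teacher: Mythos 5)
Your proof is correct and is essentially the same volumetric argument as the paper's: the paper phrases it as a contradiction (if $N<\alpha^{-d}$ then the union of $\alpha$-balls around a maximal packing has volume less than $\vol{B(0,1)}$, leaving an uncovered point that could be added), while you run the identical covering-plus-volume-scaling computation directly. Your side remark that ``unit sphere'' must be read as the solid unit ball for the exponent $d$ to be right is accurate and consistent with how the paper actually uses the lemma.
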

\begin{proof}
Let $\sett{x_1,\dots,x_N}$ be the largest possible packing of the unit sphere, with $N < \alpha^{-d}$. Consider the set of points that are within $\alpha$ of one of the points in the packing:
\begin{align}
\vol{x : \exists i \norm{x - x_i} \leq \alpha} 
&= \vol{\bigcup_i B(x_i, \alpha)} \\
&\leq \sum_i \vol{B(x_i,\alpha)} \\
&= N \alpha^d \vol{B(0,1)} \\
&< \vol{B(0,1)}
\end{align}
Therefore, there exists a point $y \in B(0,1)$ such that $\norm{y - x_i} > \alpha$ for all $i$. The existence of such a point contradicts the assumption that $\sett{x_1,\dots,x_N}$ is the largest possible packing. We conclude that the packing number is at least $\alpha^{-d}$. 
\end{proof}

\begin{theorem}\label{thm:memory-lower-bound}
For any $L, B > 0$ and any $\epsilon \leq \frac{LB}{2}$, any optimization algorithm that is guaranteed to return an $\epsilon$-suboptimal point for any convex, $L$-Lipschitz function with $\norm{x^*} \leq B$ must use at least $d \log \frac{LB}{2\epsilon}$ bits of memory.
\end{theorem}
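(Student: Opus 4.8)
The plan is a counting argument against a family of hard instances built from a packing. First I would note that any algorithm using $M$ bits of memory computes its output from some $M$-bit state --- in the notation above, $A(F) = \out(\mu_T)$ with $\mu_T \in \sett{0,1}^M$ --- and so it can return at most $2^M$ distinct points, ranging over all functions it could be run on. Consequently it suffices to exhibit $N$ functions $F_1,\dots,F_N \in \mathcal{F}^d_{L,B}$ such that no single point $\hat x \in \mathbb{R}^d$ is $\epsilon$-suboptimal for two of them: correctness of the algorithm then forces $A(F_1),\dots,A(F_N)$ to be pairwise distinct, hence $2^M \ge N$ and $M \ge \log_2 N$.

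For the hard family I would use a packing. One may assume $\epsilon < LB/2$, since otherwise $d\log_2\frac{LB}{2\epsilon} \le 0$ and there is nothing to prove; so $2\epsilon/(LB) < 1$. Fix any $\alpha$ with $2\epsilon/(LB) < \alpha \le 1$, and by Lemma~\ref{lem:packing} (whose proof in fact establishes the stated bound for the closed unit ball, which is what we need here) let $v_1,\dots,v_N$ be an $\alpha$-packing of $\sett{v : \norm{v} \le 1}$ with $N \ge \alpha^{-d}$ and $\norm{v_i - v_j} \ge \alpha$ for all $i \ne j$. Define $F_i(x) = L\norm{x - Bv_i}$. Each $F_i$ is convex and $L$-Lipschitz on all of $\mathbb{R}^d$, attains its minimum value $0$ at $Bv_i$, and $\norm{Bv_i} \le B$, so $F_i \in \mathcal{F}^d_{L,B}$.

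Next I would verify that the $\epsilon$-optimal sets are pairwise disjoint. A point $\hat x$ is $\epsilon$-suboptimal for $F_i$ exactly when $\norm{\hat x - Bv_i} \le \epsilon/L$, so if some $\hat x$ were $\epsilon$-suboptimal for both $F_i$ and $F_j$ with $i \ne j$, the triangle inequality would give $B\norm{v_i - v_j} \le 2\epsilon/L$, i.e.\ $\norm{v_i - v_j} \le 2\epsilon/(LB) < \alpha$, contradicting the packing property. Therefore $2^M \ge N \ge \alpha^{-d}$, i.e.\ $\alpha \ge 2^{-M/d}$; since this holds for every $\alpha > 2\epsilon/(LB)$, we conclude $2\epsilon/(LB) \ge 2^{-M/d}$, that is, $M \ge d\log_2 \frac{LB}{2\epsilon}$.

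I do not expect a serious obstacle: the content is entirely in choosing the right hard family, and everything else is routine. The only points needing a little care are (i) that an $M$-bit-memory algorithm has at most $2^M$ possible outputs, which is immediate from the problem formalism, and (ii) the use of $\alpha$ strictly above $2\epsilon/(LB)$ followed by the limiting step, which is needed only because a generic $\alpha$-packing guarantees pairwise distances $\ge \alpha$ rather than $> \alpha$; with the strict convention one may simply take $\alpha = 2\epsilon/(LB)$. It is also worth being explicit that the relevant packing is of the unit \emph{ball} (not the sphere), since the ball is what makes the $\alpha^{-d}$ bound correct for all $d$.
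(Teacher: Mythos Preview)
Your proposal is correct and follows essentially the same approach as the paper: build the hard family $F_i(x)=L\norm{x-x_i}$ from a packing of the ball, observe that the $\epsilon$-sublevel sets are pairwise disjoint, and count outputs against the $2^M$ possible final memory states. The only cosmetic differences are that the paper applies Lemma~\ref{lem:packing} directly at scale $2\epsilon/L$ in the radius-$B$ ball (its proof yields strict separation, so no limiting step is needed), and your remark that the lemma concerns the unit \emph{ball} rather than the sphere is well taken.
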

\begin{proof}
To begin, by Lemma \ref{lem:packing} there exists a packing $\sett{x_1,\dots,x_N}$ of the ball $\sett{x: \norm{x} \leq B}$ of size at least $N \geq \parens{\frac{LB}{2\epsilon}}^d$ such that $\norm{x_i - x_j} > \frac{2\epsilon}{L}$ for all $i \neq j \in [N]$. We will associate a function with each point in the packing, let
\begin{equation}
    f_i(x) = L\norm{x - x_i}
\end{equation}
These functions are convex and $L$-Lipschitz, and their optimizers $x_i$ have norm less than $B$.

Note that any point $x$ which is an approximate minimizer of some $f_i$ must have high function value on all other functions $f_j$. Suppose $f_i(x) \leq \epsilon$, then $\norm{x - x_i} \leq \frac{\epsilon}{L}$. Consequently, for all $j \neq i$, $\norm{x - x_j} = \norm{x - x_i + x_i - x_j} \geq \norm{x_i - x_j} - \norm{x - x_i} > \frac{\epsilon}{L}$, thus $f_j(x) = L\norm{x - x_j} > \epsilon$.

Consider using a memory-bounded optimization algorithm to optimize one of these functions $f_i$. After the algorithm has made all of its first-order oracle accesses, the output function $\out$ must map from the final memory state $\mu_T$ to a solution $\hat{x}$. Suppose the final memory state $\mu_T$ uses $M < \log N$ bits, then there are at most $2^M < N$ outputs that the algorithm might give. However, as we just argued, there exist $N$ functions such that returning an accurate solution for any one of them requires returning an inaccurate solution for all the others. Consequently, any algorithm which can output fewer than $N$ different outputs will fail to optimize at least one of the functions $f_1,\dots,f_N$. 
\end{proof}

\end{document}